\newcommand{\MyMapTemplatePrefix}[4]{\expandafter#1\csname#3#4\endcsname{#2{#4}}}
\newcommand{\MyMapTemplatePrefixNew}[5]{\expandafter#1\csname#4#5\endcsname{#2{#3{#5}}}}
\def\tp{^\top}
\def\ie{\emph{i.e.}}
\def\etal{\emph{et al.}}
\def\eg{\emph{e.g.}}
\def\bphi{\boldsymbol{\phi}}
\def\bPhi{\boldsymbol{\Phi}}
\def\tlam{\tilde{\lambda}}
\def\blam{\boldsymbol{\lambda}}
\def\bLam{\boldsymbol{\Lambda}}
\def\balpha{\boldsymbol{\alpha}}
\newtheorem{thm}{Theorem}%[section]
\newtheorem{prop}{Proposition}
\newcommand{\tabincell}[2]{\begin{tabular}{@{}#1@{}}#2\end{tabular}}%%
\begin{document}
% The file aaai.sty is the style file for AAAI Press
% proceedings, working notes, and technical reports.
%
\title{Spatio-Temporal Graph Convolution for\\ Skeleton Based Action Recognition}
\author{Chaolong Li,$^*$ Zhen Cui,\thanks{Chaolong Li and Zhen Cui have equal contributions.} Wenming Zheng, Chunyan Xu, Jian Yang \\
School of Computer Science and Engineering, Nanjing University of Science and Technology, Nanjing, China\\
School of Biological Science \& Medical Engineering, Southeast University, Nanjing, China\\
\{zhen.cui, cyx, csjyang\}@njust.edu.cn; \{lichaolong, wenming\_zheng\}@seu.edu.cn}

\maketitle
\begin{abstract}
Variations of human body skeletons may be considered as dynamic graphs, which are generic data representation for numerous real-world applications. In this paper, we propose a spatio-temporal graph convolution (STGC) approach for assembling the successes of local convolutional filtering and sequence learning ability of autoregressive moving average. To encode dynamic graphs, the constructed multi-scale local graph convolution filters, consisting of matrices of local receptive fields and signal mappings, are recursively performed on structured graph data of temporal and spatial domain. The proposed model is generic and principled as it can be generalized into other dynamic models. We theoretically prove the stability of STGC and provide an upper-bound of the signal transformation to be learnt. Further, the proposed recursive model can be stacked into a multi-layer architecture. To evaluate our model, we conduct extensive experiments on four benchmark skeleton-based action datasets, including the large-scale challenging NTU RGB+D. The experimental results demonstrate the effectiveness of our proposed model and the improvement over the state-of-the-art.
\end{abstract}

\section{Introduction}

Human action recognition is one of the most active research topics due to its wide applications to video surveillance, robot vision, human computer interaction, etc. Since human body itself can be viewed as an articulated system of rigid bones connected by hinged joints, the actions of human body are essentially embodied in skeletal motions in the 3D space~\cite{ye2013survey}. Thereby, various skeleton based action recognition methods~\cite{liu2017global,shahroudy2016ntu,zhang2017deep} are springing up in recent years, accompanying with the progress of more accessible deep sensors.

Different from grid-shaped structures of images/videos, human skeleton, consisting of a series of joints and bones, has an irregular geometric structure. Human action may be regarded as a consecutive dynamic sequence of such irregular structures. Considering the motion complexity of the entire body skeleton, the sophisticated strategy is to separately model the trajectory of each joint or clique of joints (body part). The statistic model~\cite{xia2012view}, shape model~\cite{chaudhry2013bio}, and geometric model~\cite{vemulapalli2014human} are often used to characterize the motion trajectories. With the thriving of representation learning, the learning based methods, such as Hidden Markov Models (HMMs)~\cite{xia2012view} and recursive models (long-short term memory, LSTM)~\cite{liu2016spatio,liu2017global,song2017end}, become predominant in the dynamic representation of skeletal joints, because they dedicate more promising results on those public skeleton datasets.

Recently graph as a tool is used to represent skeletons~\cite{wang2016graph}, although graph is popular to model various structured objects. Nevertheless, the graph based method~\cite{wang2016graph} still takes the conventional technique line of graph kernel matching.
In fact, in the research fields of graph classification/matching, the graph representation (\eg, the statistic on graphlet~\cite{prvzulj2007biological}) and graph metric (\eg, graph kernel~\cite{vishwanathan2010graph}) have been historically well-studied. With recent successes of deep learning on various problems, deep representation of graphs has aroused more attention~\cite{yanardag2015deep,seo2016structured,li2017action}. But the most crucial problem is the definition/identification of homogeneous graphs because the same responses should be produced from those homogeneous graphs. To this end, Niepert \etal~\cite{niepert2016learning} used a greedy strategy of sorting those nodes within a local neighbor region and then performed convolution-like filtering on the sorted nodes. Instead of this explicit spatial definition way, Defferrard \etal~\cite{defferrard2016convolutional} introduced a deep graph method based on spectral filtering, inspired by the recent signal processing theory on graph~\cite{shuman2013emerging}. Li \etal~\cite{li2017action} introduced attention mechanism into graph convolution model. But those methods intrinsically belong to still-graph deep learning.

For dynamic graphs, some variants of recurrent neural network (RNN) are developed recently based on the traverse way of spatio-temporal graph. Jain \etal~\cite{jain2016structural} proposed a structural-RNN by casting spatio-temporal graph as a RNN mixture for the task of action prediction. Li \etal~\cite{li2015gated} proposed gated graph sequential neural network for the basic logical reasoning task. Seo~\etal~\cite{seo2016structured} fed the spatial filtered graph signals into LSTM for image generation. However, these methods do not yet absorb the essential successes of convolutional neural networks (CNN), which have changed AI landscape with breakthrough results on numerous applications. %Hence, how to apply local convolutional filtering to graph sequences is great meaningful to learn robust representation of dynamic graphs.

In this paper, we propose a spatio-temporal graph convolution (STGC) approach to represent dynamic skeletal graph sequences. To encode the graph structure data, we design multi-scale convolutional filters, each of which is composed of receptive field computation and signal mapping. The receptive field is computed from the polynomial of adjacency matrix, which makes sure the same responses for homogeneous graphs. We simultaneously perform local convolutional filtering on temporal motions and spatial structures. The temporal convolutional filtering recursively encodes motion variations while the spatial filtering extracts more robust feature of spatial structures.
In theory, the frequency responses of multi-scale convolutional filtering are equivalent to an approximate graph Flourier transform following by one linear function of feature mapping. Taking the philosophy of multi-scale convolutional filtering, we develop a recursive graph convolution model inspired by autoregressive moving average (ARMA). We theoretically analyze the stability of the proposed model and provide a theoretical upper-bound. Finally, we stack the spatio-temporal graph convolution into a deep architecture. To verify our proposed method, we conduct extensive experiments on four benchmark skeleton-based action datasets including the largest NTU RGB+D dataset~\cite{shahroudy2016ntu}. The experimental results demonstrate the effectiveness of our proposed model and a more promising direction for skeleton based action recognition.

In summary, our contributions are three folds:
\begin{itemize}
  \item Propose a spatio-temporal graph convolution approach, which assembles the successes of local convolutional filtering and sequence learning ability of recursive learning. The generic model is further extended to a deep architecture.
  \item Theoretically prove the stability of the proposed model and provide a theoretical upper-bound.
  \item Achieve the state-of-the-art performances on the four benchmark datasets including the large-scale challenging dataset NTU RGB+D.
\end{itemize}

\section{Graph Preliminary}

Consider an undirected graph $\mcG=(\mcV,\A)$ of $N$ nodes, where $\mcV=\{v_i\}_{i=1}^{N}$ is the set of nodes, $\A$ is a (weighted) adjacency matrix. The adjacency matrix $\A\in\mbR^{N\times N}$ records the connections between nodes, where if $v_i, v_j$ are not connected, then $A_{ij}=0$, otherwise $A_{ij}\neq 0$.

The graph Laplacian matrix $\L$ is defined as $\L = \D-\A$, where $\D\in\mbR^{N\times N}$ is the diagonal degree matrix with $D_{ii}=\sum_{j}A_{ij}$. A popular option is the normalized version, \ie, each weight $A_{ij}$ is multiplied by a factor $\frac{1}{\sqrt{D_{ii}D_{jj}}}$, formally, $\L^{norm} = \D^{-\frac{1}{2}}\L\D^{-\frac{1}{2}}= \I-\D^{\frac{1}{2}}\A\D^{\frac{1}{2}}$,
where $\I$ is the identity matrix. Unless otherwise specified, we use the normalized Laplacian matrix below.

As a symmetric semi-positive definite (SPD) matrix, the graph Laplacian $\L$ has a complete set of orthonormal eigenvectors $\{\bphi_1,\cdots,\bphi_N\}$ satisfying $\L\bphi_i = \lambda_i\bphi_i$, where $\{\lambda_i\}$ are nonnegative real eigenvalues. We assume all eigenvalues are ordered as $0=\lambda_1\leq\lambda_2\cdots\leq\lambda_N=\lambda_{max}$. For the normalized Laplacian matrix, we have a bound of $\lambda_{max}=2$. In matrix expression, the Laplacian matrix can be written as $\L=\bPhi\Lambda\bPhi\tp$, where $\Lambda=\diag([\lambda_1,\lambda_2,\cdots,\lambda_N])$ and $\bPhi=[\bphi_1\tp;\bphi_2\tp;\cdots;\bphi_N\tp]\tp$.

According to graph theory~\cite{shuman2013emerging}, the graph Fourier transform (GFT) of a signal $\x$ in spatial domain can be defined as $\hbx = \bPhi\tp\x$, where $\hbx$ is the produced graph frequency signal. The corresponding inverse GFT is $\x = \bPhi\hbx$. A graph filtering $\H$ is an operator that acts upon a graph signal $\x$ by amplifying or attenuating its graph Fourier coefficients as $\H\x = \sum_{n=1}^N H(\lambda_n)\whx_n \bphi_n$. The graph frequency response $H:[\lambda_{min},\lambda_{max}]\rightarrow\mbC$ controls how much $\H$ amplifies the signal spectra $H(\lambda_n)=(\bphi_n\tp\H\x)/\whx_n$.

\section{The Model}

% We focus on human action recognition from skeleton sequences with 3D joints. In this section, we first give the ......

%\subsection{Skeletal Graph Construction}

Human body skeleton is represented with a group of 3D spatial coordinates of joints. Hence, one can use a graph to depict spatial relation of skeletal joints. To keep the original coordinate information, we add attributes of nodes into the graph, \ie, $\mcG=(\mcV,\A,\X)$, where $\mcV=\{v_i\}_{i=1}^{N}$ is the set of nodes w.r.t. skeletal joints, $\A$ is a (weighted) adjacency matrix and $\X$ is a matrix of graph signals/attributes. According to the body bones between joints, we define those connected edges, and simply assign them to 1. In addition, other strategies (\eg, Gaussian kernel) may be used to produce the adjacency matrix. The signal matrix $\X=[\x_1, \x_2, \cdots, \x_d]\in\mbR^{N\times d}$ is supported on the node set $\mcV$, whose $i$-th component (or channel) $\x_i$ represents a signal of all nodes. That means, the $i$-th node $v_i\in\mcV$ is assigned with a signal vector of $d$ dimensions. For skeletal data, we define the signal of each joint with its 3D spatial coordinates, \ie, $\x_{i\cdot}:\mcV\rightarrow (x_i, y_i, z_i)$. Thus, for a dynamic graph sequence of length $T$, we may formulate it as a stream of graphs $(\mcG_1, \mcG_2, \cdots, \mcG_T)$, where $\mcG_k=(\mcV, \A_k, \X_k)$ denotes the skeleton at the $k$-th time slice.

%The signal $\x_i\in\mbR^3$ of node $v_i$ is an observed vector (a.k.a attribute), which is 3D spatial coordinates of the joint.

%For the task of skeleton-based human action recognition, a node $v_i\in\mcV$ corresponds the $i$-th joints and an edge corresponds a bone between two joints. For the
%weight between two nodes, we assign a connected edge to 1, otherwise 0. In addition, several alternative ways to weight edges are permissible, \eg, Gaussian kernel. The signal $\x_i\in\mbR^3$ of node $v_i$ is an observed vector (a.k.a attribute), which is 3D spatial coordinates of the joint.

\subsubsection{Multi-scale Graphical Convolutional Kernels}

In the standard CNN running on images, the receptive field may be conveniently defined as a local square spatial region, due to grid-shaped structure. So convolutional filtering on regular structures is accessible. On the contrary, the construction of convolutional kernels on graphs is intractable because the same filtering responses are required for homogeneous graph structures. Inspired by the graph theory~\cite{shuman2013emerging}, we resort to the adjacency matrix $\A$, which expresses the connections between nodes. As $\A^k$ exactly records the $k$-path reachable nodes, so we can construct a $k$-neighbor receptive field by defining a $k$-order polynomial of $\A$, denoted as $\psi_k(\A)$. Taking the simplest case, let $\psi_k(\A)=\A^k$, which actually describes the $k$-hop neighbor nodes. In practice, we may replace $\A$ with the Laplacian matrix $\L$ to avoid the scale effect of matrix norm during the recursive inference (see the following model). Thus, for receptive fields of $K$ scales, we define multi-scale convolutional filtering as
\begin{align}
\Z = \mcG*f = \sum_{k=0}^{K-1} \psi_k(\L)\X\V_k, \label{eqn:g_f}
\end{align}
where $\psi_k(\L)$ expresses the receptive field of the $k$-th scale and $\V_k\in\mbR^{d\times d'}$ is the corresponding signal transformation. The computation of $\psi_k(\L)\X$ weightedly summarizes the information of all nodes within the $k$-scale receptive field, which thus is homogenous-invariant for graphs.

\subsubsection{Spatio-Temporal Graph Convolution}

Inspired by the design philosophy of autoregressive moving average (ARMA)~\cite{hannan2012statistical}, we construct the spatio-temporal graphical convolutional model as follows,
\begin{align}
  \Y_{t+1} &=  \sum_{k=0}^{K_1-1} \psi_k(\L)\Y_{t}\W_k + \X_{t}\V_0, \label{eqn:mdl:yta2}\\
  \O_{t+1} &= \Y_{t+1} + \sum_{k=1}^{K_2-1}\psi_k(\L)\X_t\V_k, \label{eqn:mdl:zta2}
\end{align}
where $\psi_k(\cdot)$ is a receptive field function on the $k$-th scale, $\{\W_k\in\mbR^{d'\times d'}, \V_k\in\mbR^{d\times d'}\}$ are the signal transformation matrices with regard to the $k$-scale,
$K_1$ and $K_2$ are the number of kernels respectively in the temporal and spatial domain. In the above model, $\Y=[\y_1,\cdots,\y_{d'}], \O=[\o_1,\cdots,\o_{d'}]$ can be respectively viewed as the hidden state and the output state. Along time slices, signals are recursively regressed with local convolutional kernels in Eqn.~(\ref{eqn:mdl:yta2}), thus motion variations can be sequentially encoded. The output signals~in Eqn.~(\ref{eqn:mdl:zta2}) combine spatially convolutional graph signals as well as dynamic temporal signals. Moreover, each output signal $\o_i$ is dependent on all input signals $\{\x_1,\cdots,\x_d\}$. Specifically, when signals are independent on each other, and the spatio-temporal convolutional filters are separately operated on a channel of signals, then the dynamic graph convolution model can be written as
\begin{align}
  \Y_{t+1} &=  \sum_{k=0}^{K_1-1} \psi_k(\L)\Y_{t}\diag(\w_k) + \X_{t}\diag(\v_0), \label{eqn:mdl:yta1}\\
  \O_{t+1} &= \Y_{t+1} + \sum_{k=1}^{K_2-1}\psi_k(\L)\X_t\diag(\v_k), \label{eqn:mdl:zta1}
\end{align}
where $\w_k = [w_{k1}, \cdots, w_{kd}]\tp, \v_k = [v_{k1}, \cdots, v_{kd}]\tp$ are mapping parameters, and $w_{ki}, v_{ki}$ are associated to the $k$-th scale of the $i$-th signal. Note here the dimension of output is assumed to be the input dimension. It is easy to extend into $d'\neq d$ in the case of signal independency.

\subsubsection{Frequency Domain Analysis}

As the receptive field function $\psi_k(\cdot)$ is a polynomial expression, we can derive $\psi_k(\L)=\psi_k(\bphi\diag(\blam)\bphi\tp)=\bphi\psi_k(\diag(\blam))\bphi\tp$. Thus the frequency response of convolution filtering in Eqn.~(\ref{eqn:g_f}) can be written as
\begin{align}
\hbZ = \sum_{k=0}^{K-1}\diag(\psi_k(\blam))\hbX\V_k . \label{eqn:htz0}
\end{align}
In particular, if signals are independent as expressed in Eqn.~(\ref{eqn:mdl:yta1}) and (\ref{eqn:mdl:zta1}), then we can obtain the frequency responses of graphical signals as $
\hbZ = \sum_{k=0}^{K-1}(\psi_k(\blam)\v_k\tp)\odot\hbX$, each frequency response of which is formally  $\whZ_{ij}=\sum_{k=0}^{K-1}(v_{kj}\psi_k(\lambda_i))\whX_{ij}$. Therefore, when signals are independent, the multi-scale convolutional filtering on graphs may be regarded as a ($K$-1)-order polynomial approximation of graph Flourier transform, if let $H(\lambda_i)=\sum_{k=0}^{K-1}v_{ki}\psi_k(\lambda_i)$.

In the signal dependency case, we can decompose $\V_k$ into a diagonal matrix multiplied by a general matrix, \ie, $\V_k = \diag(\balpha_k)\tbV_k$. Then the frequency response of convolution filtering in Eqn.~(\ref{eqn:htz0}) can be written as
$\hbZ = \sum_{k=0}^{K-1}((\psi_k(\blam)\balpha_k\tp)\odot\hbX)\tbV_k$.
Therefore, in the general case of signal dependency, the calculation of frequency responses may be understood as two steps: (i) perform a polynomial approximation of GFT on each input signal, and (ii) transform multi-channel signals by the new mappings $\{\tbV_k\}$.

\subsubsection{Stability Analysis}

For any sequence of graph realizations $\{\mcG_1, \mcG_2, \cdots\}$, we can prove the stability of the recursive model (Eqn.~(\ref{eqn:mdl:yta2})$\sim$(\ref{eqn:mdl:zta2})), which is summarized in the following theory.

\begin{thm}\label{thm:1}
Suppose the Laplacian matrix $\L$ has the eigenvalue decomposition $\L=\bPhi\bLam\bPhi\tp$, and $\psi_k(\cdot)$ is a $k$-order polynomial function satifying $\|\psi_k(\L)\|_2 \leq 1$. For the signal mappings $\{\W_0,\cdots, \W_{K_1-1}\}$, if their diagonal elements are non-negative, i.e., $W_{k,ii}\geq 0$, and $\sum_{k=0}^{K_1-1}\|\W_{k}\|_\infty<1$, the frequency response $vec(\hbO)$ in Eqn.~(\ref{eqn:mdl:zta2}) will converge to
\begin{align}
&\lim_{t\rightarrow\infty}vec(\hbO_t)=\mcT\cdot vec(\hbX_{t-1}), \label{eqn:lim_vec_hbv_t}\\
\mcT &= (\I\!-\!\Gamma_0^{K_1}(\W,\bLam))^{-1}\Gamma_0^{1}(\V,\bLam) \!+\! \Gamma_1^{K_2}(\V, \bLam),\label{eqn:lim_T}
\end{align}
where $\Gamma_a^{b}(\W,\bLam)=\sum_{k=a}^{b-1} (\W_k\tp\otimes\psi_k(\bLam))$ (similar for $\Gamma_0^{K_2}$).
Moreover, the transformation function $\mcT$ has an upper-bound:
\begin{align}
\|\mcT\|_2 < \frac{\|\V_0\|_\infty}{1-\sum_{k=0}^{K_1-1}\|\W_k\|_\infty} + \sum_{k=1}^{K_2}\|\V_k\|_\infty.
\end{align}
\end{thm}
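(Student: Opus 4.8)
The plan is to diagonalise the recursion in the graph-Fourier basis, vectorise it into a linear time-invariant system, prove that system is a strict contraction, and then read off both the fixed point $\mcT$ and its operator norm from a Neumann series. First I would move to the frequency domain: since each $\psi_k$ is a polynomial and $\L=\bPhi\bLam\bPhi\tp$, we have $\psi_k(\L)=\bPhi\psi_k(\bLam)\bPhi\tp$, so left-multiplying Eqns.~(\ref{eqn:mdl:yta2})--(\ref{eqn:mdl:zta2}) by $\bPhi\tp$ and writing $\hbY_t=\bPhi\tp\Y_t$, $\hbX_t=\bPhi\tp\X_t$, $\hbO_t=\bPhi\tp\O_t$ gives $\hbY_{t+1}=\sum_{k=0}^{K_1-1}\psi_k(\bLam)\hbY_t\W_k+\hbX_t\V_0$ and $\hbO_{t+1}=\hbY_{t+1}+\sum_{k=1}^{K_2-1}\psi_k(\bLam)\hbX_t\V_k$. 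Applying $vec(\cdot)$ with $vec(\mathbf{P}\mathbf{Q}\mathbf{R})=(\mathbf{R}\tp\otimes\mathbf{P})vec(\mathbf{Q})$ (and $\psi_0(\L)=\I$ for the input term) turns this into the linear system
\begin{align*}
vec(\hbY_{t+1}) &= \M\, vec(\hbY_t)+\N\, vec(\hbX_t), \\
vec(\hbO_{t+1}) &= vec(\hbY_{t+1})+\Gamma_1^{K_2}(\V,\bLam)\, vec(\hbX_t),
\end{align*}
with $\M=\Gamma_0^{K_1}(\W,\bLam)$ and $\N=\Gamma_0^{1}(\V,\bLam)=\V_0\tp\otimes\I$.

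Next, treating the input as (eventually) constant, $\hbX_t\equiv\hbX$, unrolling yields $vec(\hbY_t)=\M^t vec(\hbY_0)+\big(\sum_{j=0}^{t-1}\M^j\big)\N\, vec(\hbX)$, so everything reduces to $\rho(\M)<1$. The key observation is that $\psi_k(\bLam)$ is diagonal, so after permuting the Kronecker index order $\M$ is block-diagonal, similar to $\bigoplus_{i=1}^{N}\big(\sum_{k=0}^{K_1-1}\psi_k(\lambda_i)\W_k\tp\big)$. Since $\psi_k(\L)$ is symmetric, $\|\psi_k(\L)\|_2\le1$ forces $|\psi_k(\lambda_i)|\le1$, whence each block obeys $\rho\big(\sum_k\psi_k(\lambda_i)\W_k\tp\big)\le\big\|\sum_k\psi_k(\lambda_i)\W_k\big\|_\infty\le\sum_k|\psi_k(\lambda_i)|\,\|\W_k\|_\infty\le\sum_{k=0}^{K_1-1}\|\W_k\|_\infty<1$ (the sign condition $W_{k,ii}\ge0$ being available if a finer per-block estimate is wanted). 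Hence $\rho(\M)<1$, $\M^t\to0$, and $\sum_{j\ge0}\M^j=(\I-\M)^{-1}$ by the Neumann series; taking $t\to\infty$ gives $vec(\hbY_t)\to(\I-\M)^{-1}\N\, vec(\hbX)$ and therefore $vec(\hbO_t)\to\mcT\, vec(\hbX)$ with $\mcT=(\I-\M)^{-1}\N+\Gamma_1^{K_2}(\V,\bLam)$, which is (\ref{eqn:lim_T}); setting $\hbX=\hbX_{t-1}$ recovers (\ref{eqn:lim_vec_hbv_t}).

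For the upper bound, the triangle inequality and submultiplicativity give $\|\mcT\|\le\|(\I-\M)^{-1}\|\,\|\N\|+\|\Gamma_1^{K_2}(\V,\bLam)\|\le\frac{\|\N\|}{1-\|\M\|}+\|\Gamma_1^{K_2}(\V,\bLam)\|$, using $\|(\I-\M)^{-1}\|\le\sum_{j\ge0}\|\M\|^j$ whenever $\|\M\|<1$. Evaluated in the induced $\ell_1$ norm on the vectorised space, where $\|\mathbf{P}\tp\otimes\psi_k(\bLam)\|_1=\|\mathbf{P}\|_\infty\max_i|\psi_k(\lambda_i)|\le\|\mathbf{P}\|_\infty$ because $\psi_k(\bLam)$ is diagonal, this becomes $\|\M\|_1\le\sum_{k=0}^{K_1-1}\|\W_k\|_\infty$, $\|\N\|_1=\|\V_0\|_\infty$ (since $\psi_0\equiv\I$), and $\|\Gamma_1^{K_2}(\V,\bLam)\|_1\le\sum_{k\ge1}\|\V_k\|_\infty$; assembling the three estimates is exactly the claimed bound.

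The step I expect to be the main obstacle is the strict contraction $\rho(\M)<1$: a head-on spectral-norm estimate of $\M=\sum_k\W_k\tp\otimes\psi_k(\bLam)$ would pull $\|\W_k\|_2$ into the bound, which the hypotheses do not control, so one is forced to use the diagonality of $\psi_k(\bLam)$ to decouple $\M$ into the $N$ eigenvalue-indexed blocks $\sum_k\psi_k(\lambda_i)\W_k\tp$ before invoking the row-sum ($\|\cdot\|_\infty$) hypothesis. Two further points need care: the ``frozen input'' reduction should be stated either as the steady-state gain of the time-invariant system or, if $\hbX_t$ is merely convergent, justified by a Toeplitz/Ces\`aro limit applied to $\sum_j\M^j\N\,vec(\hbX_{t-1-j})$; and one should be explicit that the ``$\|\cdot\|_2$'' in the final inequality is most naturally the induced $\ell_1$ operator norm on the vectorised space (which is precisely what produces the $\|\W_k\|_\infty,\|\V_k\|_\infty$ on the right-hand side), passage to the spectral norm requiring the usual finite-dimensional norm equivalence.
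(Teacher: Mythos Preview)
Your plan is correct and tracks the paper's proof closely: the paper also passes to the frequency domain, vectorises via $vec(\mathbf{P}\mathbf{Q}\mathbf{R})=(\mathbf{R}\tp\otimes\mathbf{P})vec(\mathbf{Q})$, unrolls the recursion, shows the iteration matrix $\Gamma_0^{K_1}(\W,\bLam)$ is a strict contraction by exploiting its block structure (each $(i,j)$-block is the diagonal matrix $\sum_k W_{k,ij}\psi_k(\bLam)$, which is exactly your ``permute into $N$ eigenvalue-indexed $d'\times d'$ blocks'' viewed from the other Kronecker factor), and then reads off the limit.

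The one genuine methodological difference is in the upper bound on $\|(\I-\M)^{-1}\|$. The paper does \emph{not} use the Neumann estimate $\|(\I-\M)^{-1}\|\le(1-\|\M\|)^{-1}$; instead it shows that $\I-\Gamma_0^{K_1}(\W,\bLam)$ is strictly block diagonally dominant and then invokes the Ahlberg--Nilson--Varah bound for SBDD matrices. This is precisely where the paper consumes the hypothesis $W_{k,ii}\ge 0$: it is used to get $\min_l|1-\sum_k W_{k,ii}\tlam_{kl}|\ge 1-|\sum_k W_{k,ii}|$ for the diagonal blocks. Your route is shorter and, as you noticed, never needs the sign condition, since $\|\W_k\tp\otimes\psi_k(\bLam)\|_1=\|\W_k\|_\infty\max_i|\psi_k(\lambda_i)|\le\|\W_k\|_\infty$ already gives $\|\M\|_1<1$ and hence the same denominator. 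So your argument actually recovers the bound under a slightly weaker hypothesis. Your caveat about the norm label is also well placed: the paper, like you, ends up with an $\|\cdot\|_\infty$-type bound rather than a literal spectral-norm bound, and your remark on norm equivalence (and on the frozen/convergent input reduction) is more careful than what the paper writes.
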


\begin{proof}
As $\psi_k(\cdot)$ is a polynomial function, we can transform the recursive model (Eqn.~(\ref{eqn:mdl:yta2})$\sim$(\ref{eqn:mdl:zta2})) from spatial domain into frequency domain by using $\hbY=\bPhi\tp\Y$ and $\hbX=\bPhi\tp\X$,
\begin{align}
\hbY_{t+1} &=  \sum_{k=0}^{K_1-1} \psi_k(\bLam)\hbY_{t}\W_k + \hbX_{t}\V_0, \label{eqn:hty_ta1}\\
\hbO_{t+1} &= \hbY_{t+1} + \sum_{k=1}^{K_2-1}\psi_k(\bLam)\hbX_t\V_k \label{eqn_htz_ta1}.
\end{align}
By using the abbreviated notation $\Gamma_a^b(\cdot,\cdot)$ and $\psi_0(\cdot)=\I$, we can derive a vertorized style of Eqn.~(\ref{eqn:hty_ta1}) as
\begin{align}
  vec(\hbY_{t+1})  %vec(\sum_{k=0}^{K_1-1} \psi_k(\Lambda)\hbY_{t}\W_k) + vec(\hbX_{t}\V_0) \nonumber\\
             %& = \sum_{k=0}^{K_1-1} (\W_k\tp\otimes\psi_k(\Lambda))vec(\hbY_{t}) + (\V_0\otimes\I)vec(\hbX_{t}) \nonumber\\
              &= \Gamma_0^{K_1}(\W,\bLam)vec(\hbY_{t}) + \Gamma_0^{1}(\V,\bLam)vec(\hbX_{t})\nonumber\\
  &            = \sum_{\tau=0}^{t}((\Gamma_0^{K_1}(\W,\bLam))^\tau)\Gamma_0^{1}(\V,\bLam)vec(\hbX_{t}) \nonumber\\
             & \quad+(\Gamma_0^{K_1}(\W,\bLam))^{t+1} vec(\hbY_{0}). \label{eqn:vec(hby_ta1)}
\end{align}
Next we need to prove $\|\Gamma_0^{K_1}(\W,\bLam)\|_2<1$. The matrix $\Gamma_0^{K_1}(\W,\bLam)$ is a block matrix of $d'\times d'$ blocks, each of which is a diagonal matrix. Let $\A^{(ij)} = \sum_{k=0}^{K_1-1}W_{k,ij}\psi_k(\bLam)$ denote the $(i,j)$-th block diagonal matrix, where $W_{k,ij}$ denotes the $(i,j)$-th element of the matrix $\W_k$. As $\psi_k(\bLam)$ is a diagonal matrix, we denote  $\psi_k(\bLam)=\diag([\tlam_{k1},\tlam_{k2},\cdots,\tlam_{kn}])$ for simplification.
Since $\|\psi_k(\L)\|_2 \leq 1$, we have $\tlam_{ki}\in[-1,1]$. And, since $\sum_{k=0}^{K_1-1}\|\W_{k}\|_\infty<1$, we have
\begin{align}
&\|\Gamma_0^{K_1}(\W,\bLam)\|_2 \!\!\leq\!\! \max_{i,r} \sum_j|\A^{(ij)}_{rr}|
 \!\!=\!\! \max_{i,r} \sum_j|\!\!\sum_{k=0}^{K_1-1}W_{k,ij}\tlam_{kr}| \nonumber\\
 &\qquad\qquad\qquad \leq \max_{i,r} \sum_j \sum_{k=0}^{K_1-1}|W_{k,ij}| < 1.
\end{align}
Therefore, when $t\rightarrow\infty$, we can derive $vec(\hbY_{t+1})$ as
\begin{align}
\!\!vec(\hbY_{t+1})
       \!=\! (\I-\Gamma_0^{K_1}(\W,\bLam))^{-1}\Gamma_0^{1}(\V,\bLam)vec(\hbX_{t}).
\end{align}
After a simple algebra calculation on Eqn.~(\ref{eqn_htz_ta1}), we can reach the conclusion of Eqn.~(\ref{eqn:lim_vec_hbv_t})$\sim$(\ref{eqn:lim_T}).

Now we prove the bound of the transformation function $\mcT$. First, we can derive the following two inequations,
\begin{align}
&\|(\I-\A_{ii})^{-1}\|_\infty^{-1} = \|(\I-\sum_{k=0}^{K_1-1} W_{k,ii}\psi_k(\bLam))^{-1}\|_\infty^{-1} \nonumber\\
        &=\min_l |1-\sum_{k=0}^{K_1-1}W_{k,ii}\tlam_{kl}| > =1-|\sum_{k=0}^{K_1-1}W_{k,ii}|,\\
&\sum_{j\neq i}\|\A_{ij}\|_\infty=\sum_{j\neq i}\|\sum_{k=0}^{K_1-1} W_{k,ij}\psi_k(\bLam)\|_\infty \nonumber\\
        \!\!&= \sum_{j\neq i} \max_{l}|\sum_{k=0}^{K_1-1} W_{k,ij}\tlam_{kl}|
        <\! \sum_{j\neq i}\! \sum_{k=0}^{K_1-1} |W_{k,ij}|.
\end{align}
Note the above derivation uses the conditions $W_{k,ii}\geq 0$ and $\tlam_{kl}\in[-1,1]$. When $\sum_{k=0}^{K-1}\|\W_{k}\|_\infty<1$, we can further have $\|(\I-\A_{ii})^{-1}\|_\infty^{-1}>\sum_{j\neq i}\|\A_{ij}\|_\infty$ for $\forall i=1,\cdots, d'$. So $\I-\Gamma_0^{K_1}(\W,\bLam)$ is strictly block diagonal dominant (SBDD). According to Ahlberg-Nilson-Varah bound of SSDD matrix~\cite{moravca2007upper}, after a series of algebra derivations, we can reach the bound:
\begin{align}
\|(\I-\Gamma(\W,\bLam))^{-1}\|_\infty  < \frac{1}{1-\sum_{k=0}^{K_1}\|\W_k\|_\infty}.
\end{align}
Similarly, $\|\Gamma_0^1(\V,\bLam)\|_\infty \leq \|\V_0\|_\infty,~
\|\Gamma_1^{K_2}(\V, \bLam)\|_\infty \leq \sum_{k=1}^{K_2}\|\V_k\|_\infty.
$
Finally we obtain an upper bound of $\mcT$.
\end{proof}

When signals are independent, \ie, the recursive model takes Eqn.~(\ref{eqn:mdl:yta1})$\sim$(\ref{eqn:mdl:zta1}), we can have the following corollary based on the above theory.
\begin{prop}
Suppose the Laplacian matrix $\L$ has the eigenvalue decomposition $\L=\bPhi\diag(\blam)\bPhi\tp=\bPhi\diag([\lambda_1,\cdots,\lambda_n])\bPhi\tp$, and $\psi_k(\L)$ is a $k$-order polynomial. If signals are independent, i.e., taking the recursive model of Eqn.~(\ref{eqn:mdl:yta1})$\sim$(\ref{eqn:mdl:zta1}), for $\forall i=1,\cdots,d, j=1,\cdots, n$, and $w_{ki}\geq 0$ and $|\sum_{k=0}^{K_1-1}w_{ki}\psi_k(\lambda_j)|<1$, then the frequency response $\hbO$ in Eqn.~(\ref{eqn:mdl:zta1}) will converge to
\begin{align}
&\lim_{t\rightarrow\infty}\hbO_{t+1}=\mcT\odot \hbX_t, \label{eqn:lim_hbz_t}\\
\mcT &= \frac{\1\v_0\tp}{1-\sum_{k=0}^{K_1-1} \psi_k(\blam)\w_k\tp}+ \sum_{k=1}^{K_2-1} \psi_k(\blam)\v_k\tp,
\end{align}
where $-,\odot$ respectively denote the elementwise division and multiplication.
Further, if $|\psi_k(\lambda_j)|<1$, and considering $\mcT$ is the elementwise multiplication, then for $\forall (i,j)$ term of $\mcT$, we have an upper-bound:
\begin{align}
\|\mcT_{ij}\|_2 < \frac{\|\v_0\|_\infty}{1-\|\sum_{k=0}^{K_1-1}\w_k\|_\infty} + \sum_{k=1}^{K_2}\|\v_k\|_\infty.\label{eqn:bound2}
\end{align}
\end{prop}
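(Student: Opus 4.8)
The plan is to read this Proposition as the fully decoupled, diagonal-matrix specialization of Theorem~\ref{thm:1}, which lets one relax the operator-norm condition used there to the weaker per-coordinate condition stated here. First I would pass to the frequency domain exactly as in the proof of Theorem~\ref{thm:1}: with $\hbY_t=\bPhi\tp\Y_t$, $\hbX_t=\bPhi\tp\X_t$, $\hbO_t=\bPhi\tp\O_t$ and $\psi_k(\L)=\bPhi\psi_k(\bLam)\bPhi\tp$, the model (Eqn.~(\ref{eqn:mdl:yta1})$\sim$(\ref{eqn:mdl:zta1})) becomes $\hbY_{t+1}=\sum_{k=0}^{K_1-1}\psi_k(\bLam)\hbY_t\diag(\w_k)+\hbX_t\diag(\v_0)$ and $\hbO_{t+1}=\hbY_{t+1}+\sum_{k=1}^{K_2-1}\psi_k(\bLam)\hbX_t\diag(\v_k)$. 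The key point is that every transformation matrix and every $\psi_k(\bLam)$ is diagonal, so the system splits into $n\times d$ independent scalar recursions: the entry at eigenvalue index $i$ and channel $j$ obeys $\whY_{t+1,ij}=c_{ij}\,\whY_{t,ij}+v_{0j}\,\whX_{t,ij}$ with $c_{ij}=\sum_{k=0}^{K_1-1}w_{kj}\psi_k(\lambda_i)$. Equivalently, in the vectorized form of Theorem~\ref{thm:1} the block matrix $\Gamma_0^{K_1}(\W,\bLam)$ is itself diagonal with diagonal entries exactly the $c_{ij}$.

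Next I would use the hypothesis $|c_{ij}|<1$. Each scalar recursion is then a contraction, so unrolling it from $\hbY_0$ (as in the proof of Theorem~\ref{thm:1}) the transient term $c_{ij}^{\,t+1}\whY_{0,ij}$ vanishes and the geometric series sums, giving $\whY_{\infty,ij}=v_{0j}\whX_{t,ij}/(1-c_{ij})$; this is the Neumann-series step of Theorem~\ref{thm:1}, but trivially valid here since $\|\Gamma_0^{K_1}(\W,\bLam)\|_2=\max_{i,j}|c_{ij}|<1$ without invoking any $\ell_\infty$ bound. Substituting into the frequency form of Eqn.~(\ref{eqn:mdl:zta1}) yields $\lim_{t\to\infty}\whO_{t+1,ij}=\bigl(\tfrac{v_{0j}}{1-\sum_k w_{kj}\psi_k(\lambda_i)}+\sum_{k=1}^{K_2-1}v_{kj}\psi_k(\lambda_i)\bigr)\whX_{t,ij}$. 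Reassembling the $(i,j)$ entries into a matrix — with $\psi_k(\blam)=[\psi_k(\lambda_1),\dots,\psi_k(\lambda_n)]\tp$, the all-ones vector $\1$, and elementwise division and multiplication — gives Eqn.~(\ref{eqn:lim_hbz_t}) and the stated closed form for $\mcT$.

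For the bound I would note that each $\mcT_{ij}$ is a scalar, so $\|\mcT_{ij}\|_2=|\mcT_{ij}|\le \tfrac{|v_{0j}|}{|1-c_{ij}|}+\sum_{k=1}^{K_2-1}|v_{kj}|\,|\psi_k(\lambda_i)|$. The numerator of the first term is at most $\|\v_0\|_\infty$, and the second sum is at most $\sum_{k=1}^{K_2-1}|v_{kj}|\le\sum_{k=1}^{K_2}\|\v_k\|_\infty$ using $|\psi_k(\lambda_i)|\le 1$. For the denominator, $w_{kj}\ge 0$ together with $|\psi_k(\lambda_i)|\le 1$ for $k\ge1$ (and $\psi_0\equiv\I$) gives $c_{ij}\le\sum_{k=0}^{K_1-1}w_{kj}=\bigl(\sum_k\w_k\bigr)_j\le\|\sum_k\w_k\|_\infty$, so $1-c_{ij}\ge 1-\|\sum_k\w_k\|_\infty>0$ and hence $\tfrac{|v_{0j}|}{|1-c_{ij}|}\le\tfrac{\|\v_0\|_\infty}{1-\|\sum_k\w_k\|_\infty}$. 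Adding the two estimates gives Eqn.~(\ref{eqn:bound2}).

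The proof has no deep obstacle — it is genuinely the $d'=d$, diagonal-matrix instance of Theorem~\ref{thm:1} — so the part to be careful about is the bookkeeping around the norm conditions. The convergence hypothesis only controls the \emph{signed} quantities $|c_{ij}|<1$, whereas the clean bound in Eqn.~(\ref{eqn:bound2}) presumes the positive denominator $1-\|\sum_k\w_k\|_\infty$, which needs $\|\sum_k\w_k\|_\infty<1$; I would either add that mild assumption explicitly or deduce it from $w_{kj}\ge0$, $\psi_0\equiv\I$ and $|\psi_k(\lambda_i)|\le 1$. Relatedly, ``$|\psi_k(\lambda_j)|<1$'' should be read as holding for $k\ge1$ only, since $\psi_0(\lambda_j)=1$; and the upper summation index $K_2$ (versus $K_2-1$) in the $\v$-terms should be treated as the same harmless over-count inherited from Theorem~\ref{thm:1}.
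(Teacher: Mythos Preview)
Your proposal is correct and follows essentially the same route as the paper: pass to the frequency domain, observe that the diagonal mappings reduce the recursion to independent scalar (entrywise) recursions, unroll and sum the geometric series under the hypothesis $|\sum_k w_{ki}\psi_k(\lambda_j)|<1$, and then substitute into the output equation; the paper merely packages the same computation using the elementwise matrix $\Upsilon_a^b(\w,\blam)=\sum_{k=a}^{b-1}\psi_k(\blam)\w_k\tp$ in place of your scalar $c_{ij}$. Your treatment of the bound is in fact more explicit than the paper's, which simply asserts it ``can also be easily obtained,'' and your caveats about $\psi_0\equiv\I$ and the $K_2$ versus $K_2-1$ index are well taken.
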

\begin{proof}
According to Eqn.~(\ref{eqn:vec(hby_ta1)}) in Theory \ref{thm:1}, let $\Upsilon_a^b(\w,\blam)=\sum_{k=a}^{b-1}\psi_k(\blam)\w_k\tp$, we can have
\begin{align}
  \hbY_{t+1}
              = \sum_{\tau=0}^{t}((\Upsilon_0^{K_1}(\w,\blam))^\tau)\odot\Upsilon_0^{1}(\v,\blam)\odot\hbX_{t}  \nonumber\\
              \qquad\qquad+(\Upsilon_0^{K_1}(\w,\blam))^{t+1} \odot\hbY_{0}. \label{eqn:vec(hby_ta1)2}
\end{align}
Since $|\sum_{k=0}^{K_1-1}w_{ki}\psi_k(\lambda_j)|<1$, when $t\rightarrow\infty$, we can have
\begin{align}
   \hbY_{t+1}
       \!=\! \frac{\Upsilon_0^{1}(\v,\blam)}{1-\Upsilon_0^{K_1}(\w,\blam)}\odot\hbX_{t}.
\end{align}
Further, Eqn.~(\ref{eqn:lim_hbz_t}) can be obtained after simple derivations. The upper-bound of $\mcT$ can also be easily obtained..
\end{proof}

In practice we can take some normalization strategies on $\{\L, \W\}$ to make them satisfy these preconditions, in order to guarantee model stability.

\subsubsection{Deep Stacking}

The above recursive convolutional model can be easily extended into a deep architecture. Taking the recursive model as one basic layer, we may stack it into a multi-layer network architecture, in which the output signal $\O$ at the bottom layer is used as the input of the top layer. With the increase of layers, the receptive field size of convolutional kernels can become larger, thus the topper layer can abstract more global information. As observed from our experiments, this deep architecture (named deep STGC) can improve the performance of skeleton based action recognition. Besides, we may insert the recursive model into other networks as one basic unit to form a mixture network, according to the requirement of solved problems.

\section{Experiments}

\begin{table*}[t]
  \caption{Performance comparisons with different configurations for our model.}
  \label{tab:results}
  \centering
  \scalebox{0.8}{
    \begin{tabular}{l|c|c|c|c|c|c|c|c|c}
      \hline\hline
      \multirow{3}{*}{Method}&\multirow{2}{*}{Florence}&\multicolumn{2}{c|}{HDM05}&\multicolumn{4}{c|}{LSC}&\multicolumn{2}{c}{NTU}\\
      \cline{3-10}&&Protocol 1&Protocol 2&\multicolumn{2}{c|}{Cross Sample} &\multicolumn{2}{c|}{Cross Subject}&Cross View&Cross Subject\\
      \cline{2-10}&Accuracy&Accuracy&Accuracy&Precision&Recall&Precision&Recall&Accuracy&Accuracy\\
      \cline{1-10}
      STGC (w/o $\L$)           & 98.14\% & 73.82\% & 81.92\%$\pm$1.04 & 82.51\% & 80.80\% & 82.44\% & 81.12\% & 80.87\% & 70.29\% \\
      STGC (w/ $\L$)            & 98.60\% & 75.76\% & 82.24\%$\pm$1.18 & 83.65\% & 82.04\% & 83.63\% & 80.24\% & 81.11\% & 71.00\% \\
      STGC$_K$ (indep.)         & 98.14\% & 77.08\% & 82.49\%$\pm$1.49 & 84.70\% & 82.60\% & 82.08\% & 78.17\% & 79.71\% & 70.62\% \\
      STGC$_K$ (dep.)           & 97.67\% & 77.99\% & 83.40\%$\pm$1.30 & 86.44\% & 85.10\% & 84.00\% & 82.12\% & 81.84\% & 72.09\% \\
      Deep STGC$_K$ (indep.)    & 98.60\% & 78.19\% & \textbf{86.17\%$\pm$1.25} & 86.73\% & 86.18\% & 84.58\% & 81.11\% & 83.57\% & 73.60\% \\
      Deep STGC$_K$ (dep.)      & \textbf{99.07\%} & \textbf{78.68\%} & 85.29\%$\pm$1.33 & \textbf{88.11\%} & \textbf{87.03\%} & \textbf{85.44\%} & \textbf{83.42\%} & \textbf{86.28\%} & \textbf{74.85\%} \\
      \hline\hline
    \end{tabular}}
\end{table*}

We conduct experiments on four public skeleton based action datasets: Florence 3D~\cite{seidenari2013recognizing}, HDM05~\cite{muller2007documentation}, Large Scale Combined dataset~\cite{zhang2016} and NTU RGB+D~\cite{shahroudy2016ntu}. To investigate the effectiveness of our model, we conduct extensive experiments with different configurations listed as follows:
\begin{itemize}
  \item {STGC (w/o $\L$)} is the standard baseline without considering neighbor nodes, \ie, only performing channel mapping;
  \item {STGC w/ $\L$} uses convolutional kernels with the receptive filed of one hop neighborhood;
  \item {STGC$_K$ (indep.)} is the multi-scale filtering version under the assumption of signal independency.
  \item {STGC$_K$ (dep.)} is the general multi-scale filtering version, where signals are dependent on each other.
  \item {Deep STGC$_K$ (indep.)} is the deep scheme by stacking {STGC$_K$ (indep.)}.
  \item {Deep STGC$_K$ (dep.)} is the deep scheme by stacking {STGC$_K$ (dep.)}
\end{itemize}

\subsection{Datasets and Settings}

\subsubsection{Florence 3D (Florence)} This dataset was collected from a stationary Kinect, and each body skeleton was recorded with only 15 joints. It contains 215 action sequences of 10 subjects with 9 actions: {wave, drink from a bottle, answer phone, clap, tight lace, sit down, stand up, read watch, and bow}. Due to a few skeletal joints, some types of actions are difficult to distinguish, such as drink from a bottle, answer phone and read watch.
We follow the standard experimental settings to perform leave-one-subject-out cross-validation~\cite{wang2016graph}.

\subsubsection{HDM05}

This dataset was captured by using an optical marker-based Vicon system, and contains 2337 action sequences of 130 motion classes, which are acted by 5 non-professional actors named ``bd", ``bk", ``dg", ``mm" and ``tr". Each skeleton data is represented with 31 joints. Until now, this dataset should involve the most skeleton-based action categories to the best of our knowledge. Due to the intra-class variations and large number of motion classes, this dataset is challenging in action recognition.
To compare with those previous literatures, we conduct two types of experiments by following two widely-used protocols. Firstly, we use two subjects ``bd" and ``mm" for training and the remaining three for testing~\cite{wang2015beyond}. Secondly, to fairly compare the current deep learning methods, we conduct 10 random evaluations, each of which randomly selects half of the sequences for training and the rest for testing~\cite{huang2017riemannian}.

\subsubsection{Large Scale Combined (LSC)}

This dataset combines nine publicly available datasets
\cite{li2010action,wang2015convnets,wang2016action,xia2012view,wang2012mining,oreifej2013hon4d,koppula2013learning,sung2012unstructured,bloom2013dynamic,bloom2012g3d,ni2011rgbd,chen2015utd}, and form a complex action dataset with 88 actions. As each individual dataset has its own characteristics in action execution manners, backgrounds, acting positions, view angles, resolutions, and sensor types, the combination of a large number of action classes makes the dataset more challenging in suffering large intra-class variation compared to each individual dataset.
Following Zhang~\etal~\cite{zhang2016}, we conduct experiments using two standard settings, \ie, random cross subject evaluation and random cross sample evaluation. For each action, half of the subjects/samples are randomly selected for training while the rest for testing.

\subsubsection{NTU RGB+D (NTU)}

This dataset is collected by Kinect v2 cameras from different views. It consists of 56880 sequences for 60 distinct actions, including various of daily actions and pair actions performed by 40 subjects. The skeleton data is represented by 25 joints. As far as we know, this dataset is currently the largest skeleton-based action recognition dataset. The large intra-class and view point variations make this dataset great challenging. Meanwhile, a large amount of samples will bring a new challenge to the current skeleton-based action recognition methods.
We follow the two types of standard evaluation protocols~\cite{shahroudy2016ntu}, \ie, cross-view evaluation, cross-subject evaluation, to perform experiments.

\begin{figure}[t]
  \centering
  \includegraphics[width=2.8in]{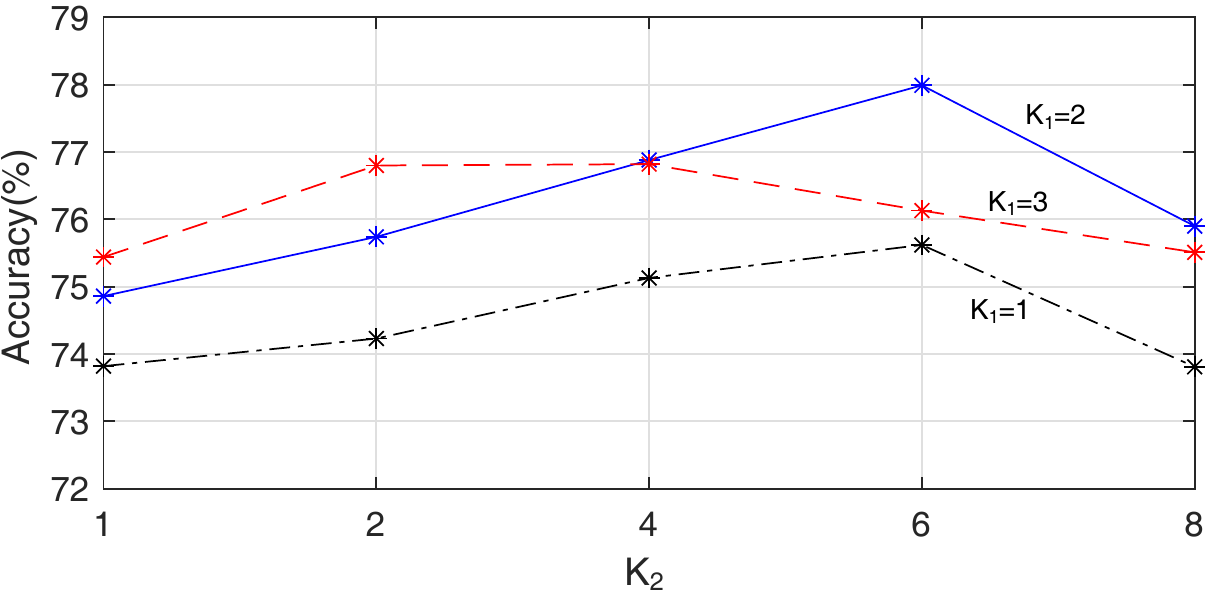}\\
  \caption{Comparisons of different convolutional kernel scale $K_1,K_2$ on HDM05 (Protocol 1) for our SGTC.}
  \label{fig:params}
\end{figure}

\subsection{Implementation Details}

\subsubsection{Data Processing}

As skeletal data is usually captured from multi-view points and human actions are independent on the user coordinate system, we modify the origin of the coordinate system as the orthocenter of joints for each frame of skeleton, \ie,
$\mcO=\frac{1}{N}\sum_{i=1}^{N} \x_i$, where $\x_i\in \mbR^3$ is a 3D coordinate of the $i$-th joint, $N$ is the number of joints. Specially, for NTU, we preprocess the joint coordinates in a way similar to Shahroudy~\etal~\cite{shahroudy2016ntu}.
To enhance the robustness of model training, we perform data augmentation as widely used in previous deep learning literature~\cite{liu2016spatio,shahroudy2016ntu}. Concretely, for each action sequence, we split the sequence into several equal sized subsequences, here 12 segments, and then pick one frame from each segment randomly to generate a large amount of training sequences. In addition, we randomly scale the skeletons by multiplying a factor in [0.98, 1.02] for the sake of the adaptive capability of scaling.

\subsubsection{Model Configuration}

For the undirected attribute graph, we simply design edge connections according to human bones. If two joints are bridged with a bone, the edge weight is assigned to 1, otherwise 0. The signals of each node are set to its 3D coordinate. For the receptive field function, we use the simplest polynomial term, \ie, $\psi_k(\L)=\L^k$, which represents $k$-hop neighbors. According to Theory \ref{thm:1}, we normalize/clip $\L=\L/\lambda_{max}$ ($\lambda_{max}=2$), $\W_{k,ii}\geq 0, \sum_k|\W_k|\1<\1$ after the gradient update at each iteration, to make sure the model stability. The outputs of recursive model are concatenated into a softmax layer for classification. In the default case, the dimension of output signals is $d'=32$. For deep STGC, we empirically observe that stacking two layers is good enough to these datasets, thus we only employ two-layer network for Deep STGC. The outputs of each layer in deep STGC are $32, 64$ dimensions. The most important factor of our model is the scale of convolutional kernels, which is analyzed in the next Section~\ref{sec:receptive}.

\begin{table}[t]
  \caption{Comparisons on Florence 3D dataset.}
  \label{tab:Florence}
  \centering
  \scalebox{0.7}{
    \begin{tabular}{lc}
      \hline\hline
      Method  &  Accuracy \\ \hline
      Multi-part Bag-of-Poses~\cite{seidenari2013recognizing} & 82.00\% \\
      Riemannian Manifold~\cite{devanne20153}                 & 87.04\% \\
      Lie Group~\cite{vemulapalli2014human}                   & 90.88\% \\
      Graph-Based~\cite{wang2016graph}                        & 91.63\% \\
      MIMTL~\cite{yang2017discriminative}                     & 95.29\% \\
      P-LSTM~\cite{shahroudy2016ntu}                          & 95.35\% \\ \hline
      STGC$_K$                                                & 97.67\% \\
      Deep STGC$_K$                                           & {\bf 99.07\%} \\
      \hline\hline
    \end{tabular}}
\end{table}

\begin{table}[t]
  \caption{Comparisons on HDM05 dataset.}
  \label{tab:HDM05}
  \centering
  \setlength{\tabcolsep}{4pt}
  \scalebox{0.7}{
    \begin{tabular}{lcc}
      \hline\hline
      Method & Accuracy & Accuracy \\ \hline
      RSR-ML~\cite{harandi2014manifold}       & 40.0\% & - \\
      Cov-RP~\cite{tuzel2006region}           & 58.9\% & - \\
      Ker-RP~\cite{wang2015beyond}            & 66.2\% & - \\
      SPDNet~\cite{huang2017riemannian}       & - & 61.45\%$\pm$1.12\\
      Lie Group~\cite{vemulapalli2014human}   & - & 70.26\%$\pm$2.89\\
      LieNet~\cite{huang2016deep}             & - & 75.78\%$\pm$2.26\\
      P-LSTM~\cite{shahroudy2016ntu}          & 70.4\% & 73.42\%$\pm$2.05 \\ \hline
      STGC$_K$                                & 78.0\% & 83.40\%$\pm$1.30 \\
      Deep STGC$_K$                           & {\bf 78.7\%} & {\bf85.29\%$\pm$1.33} \\
      \hline\hline
    \end{tabular}}
\end{table}

\begin{table}[t]
  \caption{Comparisons on Large Scale Combined dataset.}
  \centering
  \label{tab:LSC}
  \scalebox{0.7}{
    \begin{tabular}{lcccc}
      \hline\hline
      \multirow{2}{*}{Method} &
      \multicolumn{2}{c}{Cross Sample} & \multicolumn{2}{c}{Cross Subject} \cr\cline{2-5}
      & Precision & Recall & Precision & Recall \cr
      \hline
      HON4D~\cite{oreifej2013hon4d}         & 84.6\% & 84.1\% & 63.1\% & 59.3\% \\
      Dynamic Skeletons~\cite{hu2015jointly}& 85.9\% & 85.6\% & 74.5\% & 73.7\% \\
      P-LSTM~\cite{shahroudy2016ntu}        & 84.2\% & 84.9\% & 76.3\% & 74.6\% \\ \hline
      STGC$_K$                              & 86.4\% & 85.1\% & 84.0\% & 82.1\% \\
      Deep STGC$_K$                         &{\bf 88.1\%}&{\bf 87.0\%}&{\bf 85.4\%}&{\bf 83.4\%}\\
      \hline\hline
    \end{tabular}}
\end{table}

\begin{table}[t]
  \caption{Comparisons on NTU RGB+D dataset.}
  \label{tab:NTU}
  \centering
  \scalebox{0.7}{
    \begin{tabular}{lcc}
      \hline\hline
      Method & \tabincell{c}{Cross\\View} & \tabincell{c}{Cross\\Subject} \\
      \hline
      Lie Group~\cite{vemulapalli2014human}               & 52.76\% & 50.08\% \\
      Dynamic Skeletons~\cite{hu2015jointly}              & 65.22\% & 60.23\% \\
      HBRNN~\cite{du2015hierarchical}                     & 63.97\% & 59.07\% \\
      LieNet~\cite{huang2016deep}                         & 66.95\% & 61.37\% \\
      Deep LSTM~\cite{shahroudy2016ntu}                   & 67.29\% & 60.69\% \\
      P-LSTM~\cite{shahroudy2016ntu}                      & 70.27\% & 62.93\% \\
      ST-LSTM~\cite{liu2016spatio}                        & 77.70\% & 69.20\% \\
      STA-LSTM~\cite{song2017end}                         & 81.20\% & 73.40\% \\
      GCA-LSTM~\cite{liu2017global}                       & 82.80\% & 74.40\% \\
      Geometric Features~\cite{zhang2017geometric}        & 82.39\% & 70.26\% \\
      Clips + CNN + MTLN~\cite{ke2017new}*                & 84.83\% & \textbf{79.57\%} \\
      View invariant(Raw Samples)~\cite{liu2017enhanced}* & 82.56\% & 75.97\% \\ \hline
      STGC$_K$                                            & 81.84\% & 72.09\% \\
      Deep STGC$_K$                                       & \textbf{86.28\%} & 74.85\% \\
      \hline\hline
    \end{tabular}}
  \flushleft{\quad\small\textit{* They used the VGG-19/CNN model pre-trained on ImageNet \\ ~~~~~~ after converting skeletons into images.}}
\end{table}

\subsection{Selection of Convolutional Kernel Size}\label{sec:receptive}

In the STGC$_K$ model, the parameters $\{K_1,K_2\}$ control the receptive field sizes in temporal domain and spatial domain. When increasing $K_1, K_2$, the local filtering region will cover the farther hoping neighbors. To check the effect of different scales, we conduct an experiment on HDM05 dataset by searching the temporal kernel scale $K_1$ in $\{1,2,3\}$ and the spatial kernel scale $K_2$ in $\{1, 2,4,6,8\}$. Considering the continuous convolution filtering is performed along time slices like a stacking CNN, in practice we should employ a smaller $K_1$ than $K_2$. The cross-comparison results are reported in Figure~\ref{fig:params}. The best performance is obtained at $K_1=2$ and $K_2=6$, which are used as the default parameters. Note that, $K=1$ means STGC(w/o $\L$), which convolves only on the node itself without any neighbors. If $K_1=1$ and $K_2 > 1$, only spatial filtering is taken.  Conversely, $K_1>1$ and $K_2 = 1$ for only temporal filtering. We can observe that the spatial and temporal filtering together contribute the gain for action recognition.

\subsection{Verification of STGC Structure}

For our proposed model STGC itself, we conduct a series of experiments with different configurations on four benchmark datasets. The results are summarized in Table~\ref{tab:results}. They include six configurations: STGC(w/o $\L$), STGC(w/ $\L$), STGC$_K$(indep.), STGC$_K$(dep.), deep STGC$_K$(indep.) and deep STGC$_K$(dep.). The standard baseline should be STGC(w/o $\L$), which doesn't use any adjacent relationship, \ie, implementing a multi-channel mapping like $1\times 1$ convolution on images. When introducing the convolutional kernel with 1-neighborhood, \ie, STGC(w/ $\L$), the performance is improved. As discussed in Section~\ref{sec:receptive}, we set $K_1=2$ and $K_2=6$ as default for multi-scale STGC$_K$(indep.), STGC$_K$(dep.), where the former is the version of independent signals while the latter is the general one. STGC$_K$(dep.) is slightly superior to STGC$_K$(indep.) due to the consideration of signal interaction. When extending STGC into the deep architecture of two layers, we can achieve the best performance on all four benchmark datasets.

\subsection{Comparisons with State-of-the-Art}

We compare state-of-the-art methods on Florence, HDM05, LSC, NTU, respectively, which are summarized in Table~\ref{tab:Florence},~\ref{tab:HDM05},~\ref{tab:LSC} and~\ref{tab:NTU}. As observed from these results, we have the following observations.

\textit{The proposed spatio-temporal graph convolution method is superior to the recent graph-based method~\cite{wang2016graph} and LSTM-based methods~\cite{shahroudy2016ntu,liu2016spatio,liu2017global,song2017end}}. As shown in Table~\ref{tab:Florence}, our STGC has a large improvement (more than 7\%) in contrast to the graph-based work~\cite{wang2016graph}. In principle, our STGC is very different from this work~\cite{wang2016graph}, although graph is used for both. Our method falls into a recursively convolutional architecture, while the work~\cite{wang2016graph} follows the conventional graph kernel matching technique. Also, different from those LSTM-based methods, which only model dynamics of sequences by revising LSTM, our method absorbs the success of convolutional filtering into a recursive learning with a theoretical guarantee.

\textit{Our proposed STGC improves the current state-of-the-art on most datasets}. On the Florence dataset, our method achieves a nearly perfect performance 99.07\%. On the current largest dataset NTU, under the same recursive idea, the performance is pushed to the higher 86.28\% and 74.85\%, from 82.80\% and 74.40\% for GCA-LSTM. Recently, the CNN-based methods~\cite{ke2017new,liu2017enhanced} converted skeletons into images and then employed the sophisticated CNN feature extraction techniques by using the pre-training on ImageNet~\cite{deng2009imagenet}. Even so, without the use of extra training data, our STGC is still more competitive over them.

\textit{Deep learning based methods are more effective than those shallow learning methods}. The advanced nonlinear dynamic networks, variations of LSTM~\cite{shahroudy2016ntu,liu2016spatio,liu2017global,song2017end} and CNN-based models~\cite{ke2017new,liu2017enhanced}, largely improve the action recognition performance, due to their robust representation ability. For the conventional matrix-based descriptors (\eg, covariance or its variants), although the deep manifold learning strategies~\cite{huang2016deep,huang2017riemannian} are developed recently, the matrix-based representations limit their representation capability because the only second-order statistic relationship of skeletal joints is preserved, whereas first-order statistics is also informative~\cite{ranzato2010modeling}.

\textit{Different datasets have different performances}. Florence 3D is the simplest dataset with 215 sequences and 9 action classes, thus most methods obtain a higher accuracy. The most difficult dataset should be the largest dataset NTU, which consists of 56880 sequences and covers various of daily actions and pair actions. The cross subject accuracy is still less than 80\% due to various entangled actions.
\textit{Cross subject is more difficult than cross view or cross sample}. The phenomenon is observed from Table~\ref{tab:LSC}, Table~\ref{tab:NTU}, and Table~\ref{tab:HDM05} (the left/right column w.r.t cross subject/cross sample). It is easy to understand, in the cross subject task, more unforeseeable information exists in the testing set, compared to the other tasks.

\section{Conclusion}

We proposed a spatio-temporal graph convolution approach for combining the successes of local convolution filtering and the recursive learning power of autoregressive moving average. To locally filter on spatio-temporal structures, we introduced multi-scale graphical convolutional kernels, which were composed of the receptive field matrices defined by polynomials of adjacency matrix, and signal mappings. The multi-scale convolutional kernels were simultaneously performed on hidden states of sequences for encoding the motion variations and input state for extracting spatial graphical feature. In theory, we proved the convergency of the proposed model and provided an upper-bound. Moreover, we extended the basic model into a multi-layer deep architecture. We verified the representation ability of the proposed STGC and its deep version. We also demonstrated the improvements with STGC on four public skeletal datasets, including the current largest NTU RGB+D dataset. As a generic model, the proposed model may be generalized into many problems modeled by dynamic graphs, which will be one of our  future work.

\section*{Acknowledgement}

This work was supported in part by the National Basic Research Program of China under Grant 2015CB351704 and 2014CB349303, in part by the National Natural Science Foundation of China under Grant Nos. 61772276, 61602244, 61472187, 61231002 and 91420201, and program for Changjiang Scholars.

\begin{small}
  \bibliographystyle{aaai}
  \bibliography{reference}
\end{small}

\end{document}